\def\myMediumFigureScale{0.33}
\def\myMSFigureScale{0.24}
\def\mySmallFigureScale{0.16}
\def\myLineScale{1}
\newtheorem{theorem}{Theorem}
\newtheorem{corollary}{Corollary}
\newtheorem{lemma}{Lemma}
\def\BState{\State\hskip-\ALG@thistlm}
\algnewcommand{\Or}{\textbf{or}\,}
\algnewcommand\algorithmicswitch{\textbf{switch}}
\algnewcommand\algorithmiccase{\textbf{case}}
\definecolor{darkblue}{rgb}{0,0,.6}
\definecolor{darkred}{rgb}{.7,0,0}
\definecolor{darkgreen}{rgb}{0,.6,0}
\definecolor{OliveGreen}{cmyk}{0.64,0,0.95,0.40}
\definecolor{CadetBlue}{cmyk}{0.62,0.57,0.23,0}
\definecolor{lightlightgray}{gray}{0.93}
\newcommand{\N}{\mathbb{N}}
\newcommand{\R}{\mathbb{R}}
\newcommand{\set}[1]{\{#1\}}
\newcommand{\argmin}{\mathop{\mathrm{argmin}}} 
\newcommand{\astar}{A$^*\,$}
\newcommand{\coastar}{COA$^*\,$}
\title{\LARGE \bf
A Generalized A$^*$ Algorithm for Finding Globally \\ Optimal Paths in Weighted Colored Graphs
}
\author{Jaein Lim$^{1}$ and Panagiotis Tsiotras$^{2}$
	\thanks{$^{1}$Jaein Lim is a graduate student at the School of Aerospace Engineering, Georgia Institute of Technology, 
		Atlanta. GA, USA. Email:
		jaeinlim126@gatech.edu}%
	\thanks{$^{2}$ Panagiotis Tsiotras is a Professor and David and Andrew Lewis Chair at the School of Aerospace Engineering and Associate Director at the Institute for Robotics and Intelligent Machines, Georgia Institute of Technology, Atlanta. GA, USA. Email: tsiotras@gatech.edu}
}
\begin{document}
	
\maketitle
\thispagestyle{empty}
\pagestyle{empty}

\begin{abstract}
Both geometric and semantic information of the search space are imperative for a good plan. We encode those properties in a weighted colored graph (geometric information in terms of edge weight and semantic information in terms of edge and vertex color), and propose a generalized \astar to find the shortest path among the set of paths with minimal inclusion of low-ranked color edges. 
We prove the completeness and optimality of this \textit{Class-Ordered} \astar (\coastar) algorithm with respect to the hereto defined notion of optimality. 
The utility of \coastar is numerically validated in a ternary graph with feasible, infeasible, and unknown vertices and edges for the cases of a 2D mobile robot, a 3D robotic arm, and a 5D robotic arm with limited sensing capabilities.
We compare the results of \coastar to that of the regular \astar algorithm, the latter of which finds a shortest path regardless of uncertainty, and we show that the \coastar dominates the \astar solution in terms of finding less uncertain paths.
\end{abstract}

\section{Introduction}
Path planning in partially known or unstructured environments is a relatively under-studied problem~\cite{Janson2018}, compared to the vast effort dedicated in past decades to efficiently solve for optimal paths in fully known search spaces~\cite{Dijkstra1959, Hart1968, Bohlin2000, Karaman2011, Arslan2013, Janson2015, Gammell2015, Dellin2016, Mandalika2019, Strub2020}.
In many applications however, the environment is rarely fully known to the planning agent because of either sensor limitations and/or memory constraints. What is a suitable notion of optimality in such a search space, so that the autonomous agent can make reasonable and consistent decisions?
While the notion of optimality is clear and consistent when the search space is known, in the case of uncertain search spaces one has several options. For example, the space can be approximated with a weighted graph, with each edge assigned a cost; then the optimal path is simply the path with minimum cost (e.g., the shortest path) in this graph. Search methods based on this idea have been the backbone to many numerical approaches developed for planning in dynamic or partially known environments~\cite{Koenig2002, Koenig2004, Likhachev2008, Aine2016,Kambhampati1986, Jung2007, Cowlagi2012, Hauer2015}.

A common framework to cope with partially observable or dynamic environments is re-planning~\cite{Koenig2002, Koenig2004, Likhachev2008, Aine2016}. In this framework, obstacles are a priori unknown and are only revealed during the course of the plan execution; a plan is first found on an optimistically perceived environment with no obstacles and then repaired whenever it is found to be infeasible. The benefit of these methods is their algorithmic efficiency to optimally repair the graph based on the newly perceived part of the environment. A fundamental question remains unanswered, however: “what should an optimal policy strive for in an unknown space?”~\cite{Janson2018}.
In the works of~\cite{Kambhampati1986, Jung2007, Cowlagi2012, Hauer2015}, when the search space is only partially known, the “risk” of traversing unknown edges is combined to the original distance of the edge to construct a new cost measure. The optimal path is then defined as the one that minimizes the cost with respect to this new measure. These methods treat planning in partially known search spaces as a shortest path planning problem with respect to a user-defined measure, where uncertain paths are deferred. Yet, traversing through some high-uncertainty edges cannot be explicitly avoided, if such a path is short enough in terms of the user-defined measure. Moreover, the user-defined measure is not consistent, in general, for re-planning purposes, e.g., when new information changes the perceived environment, reusing the previous search results becomes difficult.

To illustrate the issues with potential inconsistencies, consider two agents with identical planning strategies acting in the same environment. 
Assume that one agent has more accurate information about the environment than the other agent. 
If we were to choose one of the two solution paths provided by either agent, then a reasonable choice would be to choose the path computed by the agent having more information, 
regardless of the path length. This is because a reasonable measure on the quality of the path should be that it is monotonic with the information of the underlying planner. 
However, monotonicity cannot be imposed, in general, in a soft-constrained framework. One can easily construct counter-examples where the less knowledgeable agent produces a “shorter” path than the more knowledgeable agent, by modifying how much the uncertainty is penalized. 

To this end, we pay close attention to a more explicit approach presented in~\cite{Aliabdi2019, Wooden2006}, which defines a notion of optimality in the semantic search space. In that approach, the vertices and edges of the graph are classified into equivalent classes (or ``colors") so as to encode semantic information of the perceived environment. Afterwards, a constrained shortest path planning problem is solved in this weighted colored graph.
The optimal path is not necessarily the shortest path, but it is constrained by the constituent colors. In~\cite{Aliabdi2019} the planning problem in an unknown environment is cast as a constrained shortest path planning problem in a weighted bi-colored graph, where the vertices are partitioned into white (known) or gray (unknown) vertices to approximate partially known environments by excluding known, but infeasible vertices. The authors then find the shortest path in the graph such that the number of gray vertices does not exceed a certain number, or the path that has the least number of gray vertices and whose length does not exceed a certain threshold.

A general weighted colored graph is used in~\cite{Wooden2006} to relax the binary representation of the environment to encode various ranked semantic information of the search space. A weight function devised in~\cite{Wooden2006} assigns each edge to a positive real number incorporating the original edge weight and color, such that a total ordering is placed over paths with mixed colors. This total ordering favors paths with a fewer number of edges of inferior classes, and favors shorter paths if the constituting colors are the same. 
With the modified weight, a standard graph search (e.g., Dijkstra \cite{Dijkstra1959}) ensures that the optimal path respects the total order. 
That is, the resulting path is guaranteed to be the shortest path among the set of paths which include the minimum number of inferior class edges. However, global properties of the entire graph, such as the cardinality of the edge set of each color, are required to compute the modified weight, which may be intractable to compute online.

In this work, we extend regular edge-relaxation algorithms (for instance, \astar \cite{Hart1968}) to find an optimal path in a weighted colored graph based on the notion of optimality defined in~\cite{Wooden2006}. We will adopt the term ``class" over ``color" as in~\cite{Wooden2006} to differentiate from the graph-theoretic coloring problem, since the color of a vertex is not dictated by its adjacency, i.e., there may be two adjacent vertices with the same color. 
Informally, an optimal path is the shortest path in the set of paths with minimal inclusion of inferior-class edges. 
The original weight or the length of the path is used only as a tie-breaker among paths within the same class. For example, a shorter path with bad edges will be considered worse than a longer path with better constitute edges. 
This total ordering over the set of paths will be formally defined in the next section. The aim of our work is to find an optimal path without resorting to the global properties of the graph by propagating forward the cost-to-come and the path-class information to build an optimal search tree. The key insight is to use an abstract priority queue to order the expansion of optimal path candidates. We call this algorithm \textit{Class-Ordered} \astar (\coastar).

One immediate application of planning on a colored graph is that we can plan for an information-conservative path in a partially known environment. 
Suppose we  partition the search environment into three classes: feasible, infeasible, unknown. Then, we can find the shortest path that maximizes feasibility (i.e., minimize infeasible and unknown vertices). For example, if there exist two paths, an unknown short path and a feasible longer path, then the feasible path is ranked higher over the short, unknown path. 
Consequently, the agent traverses through the region that is already known. This may be useful for planning problems such as a surgical robot, a marine probe, or a rescue robot where the impact on the surroundings should be minimized. 

Another application where planning on a colored graph is beneficial is the case of a cooperative multi-agent planning problem, where agents having only local information communicate with each other by exchanging their local plans in order to to achieve a consensus on a globally acceptable optimal solution~\cite{Torreno2014, Nissim2014, Lim2020}.
We can encode information passed to other agents with a ``color" depending on the information quality (e.g., agent credibility, noise-level, etc.,) such that we can suppress the communication of inferior information below a certain class. 
In a soft-constrained framework, each agent must {receive and merge}
the transmitted information fully before determining whether to use it or discard it, and the agents will use any ``shorter" path regardless of the quality of the path. 
The colored graph planning provides another layer of ordering of paths (besides a single length+risk measure, for instance), and hence it is capable of rejecting shorter paths of inferior class. 

The contributions of this paper can be summarized as follows. First, we develop a complete and optimal algorithm to solve for the shortest path among the set of paths with minimal inclusion of low-class edges in a weighted colored graph. 
Second, we prove the completeness and optimality of the proposed algorithm. 
The proposed algorithm efficiently builds an optimal search tree by heuristically and lazily expanding only the optimal path candidates. 
Third, we validate the utility of planning on a weighted colored graph using numerical experiments. 


\section{Problem Formulation}\label{sec2}

Before discussing the problem to be solved in this paper, let us define some background notation that will be frequently used in the rest of the  paper. 
\subsection{Weighted Colored Graph}
Let $G =(V,E)$ be a graph with vertices $V$ and edges $E.$ Let $\phi_V : V \to \mathcal{K}$ be a perception vertex function that classifies each vertex $v\in V$ to a class $k \in \mathcal{K} = \set{1,...,K}$, such that $\phi_V$ partitions the set $V$, that is, $V = \bigcup_{k\in \mathcal{K}}V_k$ where $V_k = \set{v\in V : \phi_V(v) = k}$ and $V_i \cap V_j = \varnothing$ for $i\neq j$.
Similarly, define $\phi_E : E\to \mathcal{L}$ to be a perception edge function that classifies each edge $e\in E$ to a class in $\mathcal{L} = \set{1,...,L},$ such that $E = \bigcup_{\ell\in \mathcal{L}} E_\ell$ where each $E_\ell = \set{e\in E : \phi_E(e) = \ell}$ and $E_i \cap E_j = \varnothing$ for $i\neq j$. 
We will assume that the edge class set is larger than the vertex class set, i.e., $L\geq K$, and that, for each edge, $e=(u,v)$, it holds $\phi_E(e) \geq \max \set{\phi_V(u),\phi_V(v)}$. 
This assumption allows us to quickly underestimate the edge class by classifying the end vertices first. 
This serves as a heuristic edge classification before the actual, and perhaps expensive, edge classification takes place. 
Also, for each edge $e\in E$, a weight function $c: E\to \R_{+}$ assigns a non-negative real number, e.g., the distance to traverse this edge.
We will assume that the functions $\phi_E$ and $c$ are given, but are expensive to compute. Hence, we will approximate them with functions $\hat{\phi}_E$, and $\hat{c}$, respectively, that underestimate their true values, i.e., $\hat{\phi}_E \leq \phi_E$ and $\hat{c} \leq c$. We will use these admissible heuristic estimators to prioritize the expansion of optimal path candidates, in an attempt to delay the actual computation until it becomes necessary.  

\subsection{Optimal Path}

Define a path $\pi = (v_1, v_2 ,\ldots, v_m)$ on the graph $G=(V,E)$ as an ordered set of distinct vertices $v_i \in V$, $i = 1,\ldots,m$ such that for any two consecutive vertices $v_i, v_{i+1}$, there exists an edge $e = (v_i, v_{i+1}) \in E.$ 
Throughout this paper, we will interchangeably denote a path as the set of such edges.
Let $v_s, v_g \in V$ be the start and goal vertices, respectively. Denote by $\Pi(v_s,v_g)$, or $\Pi$ for short if there is no danger for ambiguity, the set of all acyclic paths connecting $v_s$ and $v_g$ in $G$.
Let $\Pi_k$ be the set of paths in which the worst (greatest) edge class included in each path in $\Pi_k$ is exactly $k$. That is, 
\[
\Pi_k = \set{\pi \in \Pi : N(\pi,k)>0 \;\mathrm{ and }\; N(\pi, \ell) =0, \forall \ell >k},
\] 
where $N(\pi,k) = |\set{e \in \pi : \phi_E(e) = k} |$ is the number of edges that are of class $k$ in path $\pi$.
Furthermore, define
\[
\Pi_k^i = \set{\pi \in \Pi_k : N(\pi, k) =i},
\]
to be the set of class-$k$ paths which have exactly $i$ edges of class $k$. We shall impose a total order on the set of paths with $\Pi_k^i \prec \Pi_\ell^j$ for any $i,j$ whenever $k< \ell$ and $\Pi_k^i \prec \Pi_k^j$ for all $i<j$. 
Hence, we define the optimal path set $\Pi^*$ as the nonempty set of paths having best worst-class edge with the least number of worst-class edges, that is, 
\[
\Pi^* = \min_{i \in \N}\min_{k \in \mathcal{K}} \set{\Pi_k^i},
\] where the minimum is defined with respect to the total path order.
We wish to solve for the optimal path $\pi^*$ which is the shortest path in $\Pi^*$, that is, 
\[
\pi^* = \argmin_{\pi \in \Pi^*} \sum_{e\in \pi} c(e).
\]

\subsection{Search Tree}

Initially, all vertices and edges of $G$ are implicitly defined and are unclassified. As the search propagates, an explicit search tree $T$ rooted at $v_s$ is built by adding classified vertices and edges.
Each vertex $v$ in the tree $T$ has the following values explicitly assigned. 
\begin{itemize}
	\item \textsf{parent}: parent vertex. 
	\item \textsf{cost-to-come}: cost-to-come accumulated along the current best path in the tree, denoted with $g_T(v).$ 
	\item \textsf{path-class}: number of edges of each class in the current best path, denoted with $\theta_T (v).$
\end{itemize}

The range of $g_T$ is the set of non-negative real numbers, whereas the range of $\theta_T$ is the set of sequences of integers of finite length $|\mathcal{K}|$, i.e., $\theta_T(v) = (N(\pi,k))_{k\in \mathcal{K}}$, where the number of edges that are of class $k$ in the current path $\pi=(v_s, \ldots, v)$ is stored in the $k$-th index. 
By the total order defined in the previous section, any two paths $\pi_1=(u_1,\ldots,u_m)$ and $\pi_2=(v_1,\ldots,v_n)$ with the same first and end vertices (i.e., $u_1=v_1$ and $u_m=v_n$) can be compared. For example, if there exists another tree $T'$ that has a better path from $v_s$ to $v$, then we have $\theta_{T'}(v) \prec \theta_T(v)$. 
We will denote the sequence of integers $(N(\pi,k))_{k\in \mathcal{K}}$ for a path $\pi = (v,\ldots,v')$ with $\theta(v,v')$. 
Then $\theta$ is sub-additive along the path, $\theta(v,v'') \preceq \theta(v,v') + \theta (v',v'')$. 
We also define an admissible estimate of $\theta(v,v')$ with $\hat{\theta}(v,v'),$ such that $\hat{\theta} \preceq \theta.$
A trivial heuristic estimate of $\hat{\theta}$ is the zero sequence. 

Note that the standard \astar algorithm stores for each vertex in a search tree $T$ a back-pointer to its parent vertex and also stores the cost-to-come accumulated by traversing the back-pointed path. Despite the fact that the cost-to-come of a vertex in $T$ can be retrieved by following the back-pointed path in the forward direction, the numerical value of the cost-to-come is explicitly stored at each vertex to expedite the ordering of expanding leaf vertices so that  the completeness and optimality of the algorithm with respect to the cost-to-come are preserved. 
The same idea can be generalized to a total ordering of vertices, based on an order defined over a set of paths, without losing correctness of the algorithm. The completeness and optimality of such generalization with respect to any ordered set of paths can be also proven, as it will be shown formally in Section~\ref{sec4}. 

\subsection{Priority Queue}

We will use an edge queue $Q$ to prioritize the expansion (i.e., evaluation) of edges based on an admissible estimate of the path class, breaking ties using conventional path cost estimates.
For each edge $e=(u,v)$ in $Q$, we associate four keys: $k_1(e)=\theta_T(u)+\hat{\theta}(u,v)+\hat{\theta}(v,v_g)$, $k_2(e)=\theta_T(u)+\hat{\theta}(u,v)$, $k_3(e)= g_T(u)+\hat{c}(u,v) + \hat{h}(v),$ and $k_4(e)= g_T(u)+\hat{c}(u,v) $ to prioritize edges in lexicographical order.
Note that $\hat{\theta} \preceq \theta$, $\hat{c}\leq c$, and $\hat{h}\leq h$ are admissible estimates of the path class, edge cost, and cost-to-go functions respectively, and $\theta_T$ and $g_T$ are the path class and the cost-to-come of the best path found so far by the algorithm.
These heuristic estimates of the path class and the path cost delay the actual evaluation of the edge, as the priority queue prioritizes edges based on their heuristic estimates.

\section{Algorithm}\label{sec3}

We begin the search by initiating a search tree $T$ with the start vertex as the root and by putting the promising outgoing edges in the priority queue $Q$, where a promising edge is an edge such that the path utilizing this edge may improve the current search tree $T$.
That is, only edges that could potentially improve the current solution with heuristic estimates are first chosen, and then among the chosen edges, only those that can potentially improve the current path to the child vertex of the edge with the heuristic estimates are selected to be inserted in the queue.
The edge with the highest index is removed from $Q$ for evaluation, and then the edge is added to the tree $T$ if such an evaluation reveals that the edge either improves the child vertex's \textsf{path-class} or reduces the child vertex's \textsf{cost-to-come} value within the same class.
The outgoing edges of the newly added leaf vertex of $T$ are then inserted in the queue accordingly.
The iteration continues until the goal vertex is reached. The resulting search tree consists of connected classified edges rooted at $v_s$, and it contains an optimal path from $v_s$ to $v_g$. The solution path is traced back from $v_g$ to its $\textsf{parent}$ up to $v_s$.

\begin{algorithm}
	\caption{\textsf{Class Ordered A}$^*(v_s, v_g, G)$}\label{a:COAS}
	\begin{algorithmic}[1]
		\State $T \gets \varnothing$, $Q \gets \varnothing$
		\State $v_s.\textsf{parent}=\varnothing$
		\State $v_s.\textsf{cost-to-come} = 0, v_s.\textsf{path-class} =0$ 
		\State \textsf{enqueueOutgoingEdges}($v_s, Q$)
		\While{true}
		\State $(u,v) \gets $ \textsf{popFrontEdge}($Q$)
		\If {$u=v_g$} \label{a:terminal_cond}
		\State \textbf{break}
		\EndIf
		\If {$\theta_T(u)+\theta(u,v) \prec \theta_T(v)$} 
		\State \textsf{rewireTree}($(u,v), T$)
		\State \textsf{enqueueOutgoingEdges}($v, Q$)
		\ElsIf{$\theta_T(u)+\theta(u,v) = \theta_T(v)$}
		\If {$g_T(u)+c(u,v) < g_T(v)$ }
		\State \textsf{rewireTree}($(u,v), T$)
		\State \textsf{enqueueOutgoingEdges}($v,Q$)
		\EndIf
		\EndIf
		\EndWhile
	\end{algorithmic}
\end{algorithm}

\begin{algorithm}
	\caption{\textsf{enqueueOutgoingEdges A}$(v, Q)$}\label{a:enqueue}
	\begin{algorithmic}[1]
		\ForAll{$w \in \textsf{neighbors}(v)$}
			\If {$\theta_T(v) + \hat{\theta}(v,w) + \hat{\theta}(w,v_g)  \preceq \theta_T(v_g)$} 
				\If {$\theta_T(v) + \hat{\theta}(v,w) \prec \theta_T(w)$}
					\State $Q \gets Q \cup (v,w)$
				\ElsIf {$\theta_T(v) + \hat{\theta}(v,w) = \theta_T(w)$}
					\If {$g_T(v)+\hat{c}(v,w) + \hat{h}(w) \leq g_T(v_g)$}
						\If{$g_T(v)+\hat{c}(v,w) < g_T(w)$}
							\State $Q \gets Q \cup (v,w)$
						\EndIf
					\EndIf
				\EndIf
			\EndIf
		\EndFor
	\end{algorithmic}
\end{algorithm}

\begin{algorithm}
	\caption{\textsf{rewireTree}$((u,v),T)$}\label{a:rewire}
	\begin{algorithmic}[1]
		\State $v.\textsf{parent} \gets u$ 
		\State $v.\textsf{cost-to-come} \gets g_T(u)+c(u,v)$
		\State $v.\textsf{path-class} \gets \theta_T(u) + \theta(u,v)$
		\State $T \gets T \cup (u,v)$
	\end{algorithmic}
\end{algorithm}

\begin{figure}[thpb]
	\centering
	\begin{subfigure}{\myMSFigureScale\textwidth}
		\includegraphics[width=\myLineScale\linewidth]{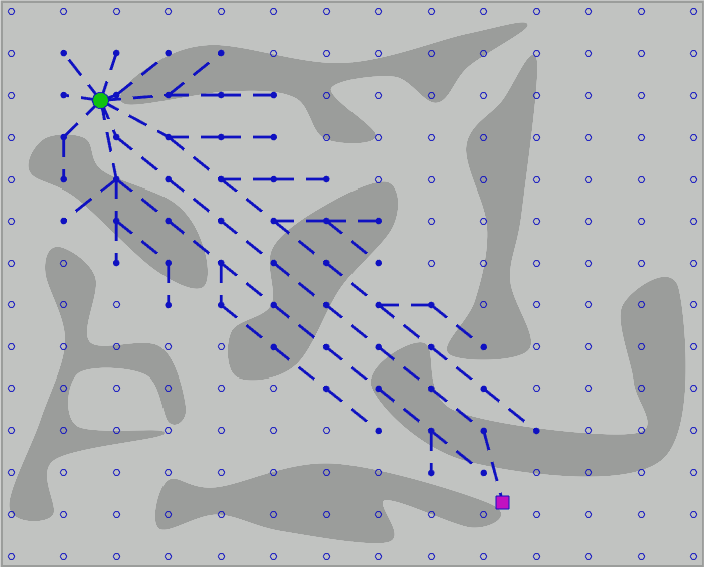}
		\caption{}
	\end{subfigure}\hfill
	\begin{subfigure}{\myMSFigureScale\textwidth}
		\includegraphics[width=\myLineScale\linewidth]{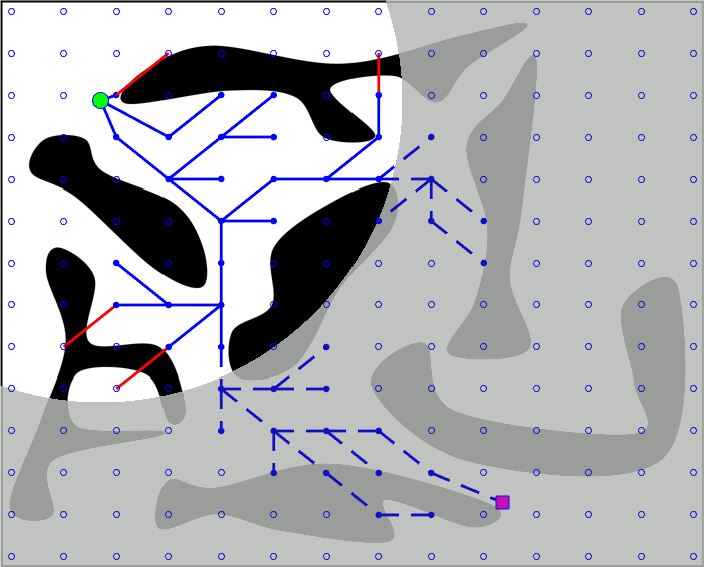}
		\caption{}
	\end{subfigure}\hfill
	\begin{subfigure}{\myMSFigureScale\textwidth}
		\includegraphics[width=\myLineScale\linewidth]{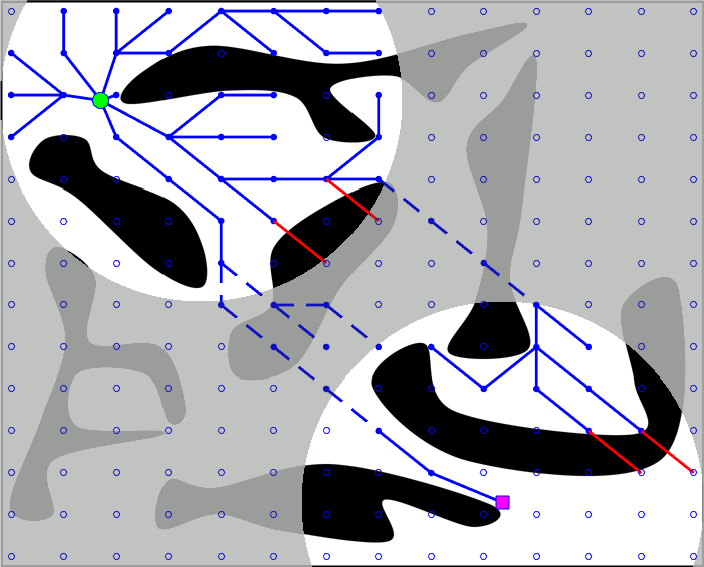}
		\caption{}
    \end{subfigure}\hfill
	\begin{subfigure}{\myMSFigureScale\textwidth}
		\includegraphics[width=\myLineScale\linewidth]{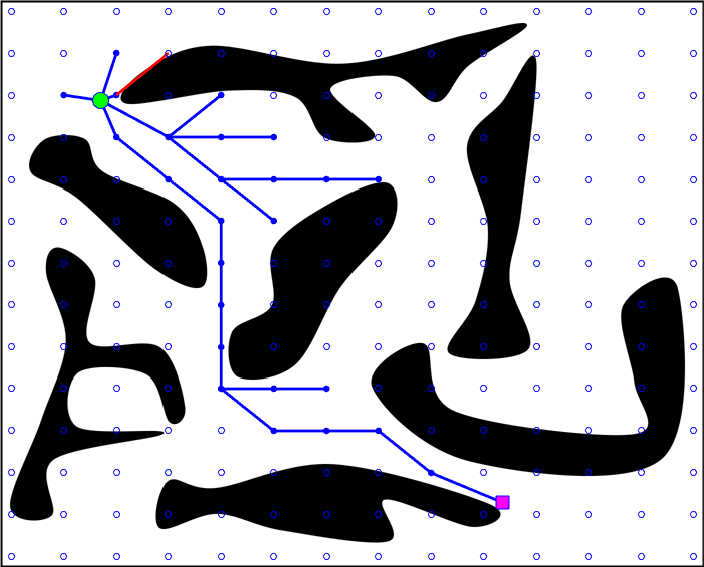}
		\caption{}
	\end{subfigure}
	\caption{Search trees built from top left start toward bottom right goal on different perceived environments, where gray region is unknown: 		
	a) completely unknown environment; 
	b) partially known around start; 
	c) partially known around start and goal;
	d) fully known environment. Solid blue line, dashed blue line, and solid red line indicate three different edge classes: feasible, unknown, infeasible, respectively, classified along the expansion of \coastar.
	}
	\label{f:result_example}
\end{figure}

\section{Analysis}\label{sec4}

We prove the completeness and optimality of \coastar by extending the results of \cite{Hart1968}. The key observation here is that the original \astar algorithm uses an ordinary ordering of real numbers for expanding paths to find a path minimizing the cost-to-come, but the vertex ordering of the algorithm can be generalized to a total ordering of paths. In our analysis, we will assume that a total path ordering exists such that for any pair of paths having the same start and goal vertices, the order gives a binary relation between the two. Then we will prove that the resulting path of such an algorithm is optimal with respect to the defined ordering. 

We will present an edge queue version of the proof instead of the vertex queue proof of the original \astar to utilize admissible estimators of edge evaluation functions. This choice relaxes the heuristic look-ahead over to an edge, such that the actual evaluation of the edge is delayed until it is necessary, under the assumption that edge evaluations are computationally expensive \cite{Cohen2014, Gammell2015, Strub2020}. However, since the goal is to find a path from a start vertex to a goal vertex, we need to first define the trivial edges involving the start and goal vertices. To this end, we define a start edge $s=(v_s, v_s)$ for a start vertex $v_s$, and a goal edge $t=(v_g,v_g)$ for a goal vertex $v_g$, where the evaluations of these trivial edges are naturally defined as: $c(s)=0$, $c(t)=0$, $\phi_E(s) = \phi_V(v_s)$ and $\phi_E(t) = \phi_V(v_g).$ 

With some abuse of notation, we will denote the optimal path $(e,\ldots,e')$ from a start edge $e$ to an end edge $e'$ in the graph with $\pi(e,e')$. 
Also, we will denote the path from an edge $e$ to $e'$ found by the \coastar algorithm so far with $\pi_T(e,e'),$ where $T$ represents the current search tree. 
We will assume that there exists an ordering for any pair of paths from the same start and end edges.  
For example, we have $\pi(e,e') \preceq \pi_T(e,e')$ since any search tree built by the algorithm cannot contain a strictly better path than the optimal path. 
Also, we denote an {underestimating heuristic} path from $e$ to $e'$ with $\hat{\pi} (e,e')$, such that $\hat{\pi}(e,e') \preceq \pi(e,e')$ for any edges $e, e'$.
Concatenation of paths is denoted with $\pi(e, e') \cup \pi(e', e'') = \pi(e, e'')$ for two adjacent paths. 
If $e'$ and $e''$ are adjacent edges on the optimal path, then, $\pi(e, e') \cup e'' = \pi(e,e'').$
We will use $\pi(e,e)=\varnothing$ for any $e \in E$, and $\varnothing \preceq \pi(e,e')$ for any edges $e, e'$.

We are now ready to present Lemma~\ref{lemma:concatenation_lemma}, which states that the path ordering is invariant under the concatenation with the same subpath.

\begin{lemma}\label{lemma:concatenation_lemma}
	Let $\hat{\pi}(e,t) \preceq \pi(e,t)$. 
	Then $\pi(s,e)\cup \hat{\pi}(e,t) \preceq \pi(s,e)\cup \pi(e,t)$, for any $e$.
\end{lemma}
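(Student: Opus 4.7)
The plan is to unpack $\preceq$ on paths with common endpoints as a two-key lexicographic order: the primary key is the class-count vector $\theta(\pi)=(N(\pi,k))_{k\in\mathcal{K}}$ compared lexicographically from the highest class index downward, and the tie-breaker is the total edge cost $\sum_{e\in\pi}c(e)$. The lemma then reduces to the fact that both keys are additive under concatenation with a common prefix, and that both the lex order on nonnegative integer sequences and the usual order on reals are invariant under adding the same offset to both sides.

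First I would record the additivity step: for edge-disjoint concatenation, $\theta(\pi(s,e)\cup\pi')=\theta(\pi(s,e))+\theta(\pi')$ componentwise and $c(\pi(s,e)\cup\pi')=c(\pi(s,e))+c(\pi')$, for each $\pi'\in\set{\hat{\pi}(e,t),\pi(e,t)}$. The sub-additive inequality quoted earlier applies to the \emph{optimal} path between general endpoints and introduces no slack here, because we are gluing two specific pieces rather than re-optimizing. Then I would case-split on how $\hat{\pi}(e,t)\preceq\pi(e,t)$ is witnessed. In the strict-$\theta$ case, let $k^{\ast}$ be the largest index at which $\theta(\hat{\pi}(e,t))$ and $\theta(\pi(e,t))$ disagree; by additivity the combined vectors $\theta(\pi(s,e))+\theta(\hat{\pi}(e,t))$ and $\theta(\pi(s,e))+\theta(\pi(e,t))$ agree at every $k>k^{\ast}$ and differ at $k^{\ast}$ by the same strict amount as before, so the lex strict inequality is preserved. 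In the tied-$\theta$ case, the combined vectors are identical and the common cost offset cancels, leaving $c(\hat{\pi}(e,t))\leq c(\pi(e,t))$ to propagate directly to the totals.

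The main obstacle is purely bookkeeping: fixing the direction of the lex comparison (highest class index first), keeping the case analysis exhaustive, and verifying that adding a common nonnegative integer sequence to two lex-comparable sequences never flips the inequality at the witnessing index. The algebraic content reduces to the additivity of $\theta$ and $c$ along disjoint concatenations, which is immediate from their definitions as edge sums; there is no deeper difficulty.
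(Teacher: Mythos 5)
Your proposal is correct and follows essentially the same route as the paper's proof: a two-case split on whether the class-count comparison between $\hat{\pi}(e,t)$ and $\pi(e,t)$ is strict or tied (with the cost acting as tie-breaker), followed by the observation that adding the common prefix's class vector and cost preserves each comparison. You merely spell out more explicitly what the paper asserts as ``the ordering is invariant under the addition,'' by identifying the witnessing index of the lexicographic comparison and noting the exact (not merely sub-additive) additivity of $\theta$ and $c$ under concatenation.
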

\begin{proof}
	Fix an edge $e=(u,v)$, and assume that $\hat{\pi}(e,t) \preceq \pi(e,t).$ There are two cases to consider. 
	\begin{enumerate}[leftmargin=*]
		\item[a)] \textit{$\hat{\theta}(u,v)+\hat{\theta}(v,t) \prec \theta(u,v)+\theta(v,t).$}
			Then, $\theta(s,u)+\hat{\theta}(u,v)+\hat{\theta}(v,t) \prec \theta(s,u)+\theta(u,v)+\theta(v,t)$, since the ordering is invariant under the addition. 
		\item[b)] \textit{$\hat{\theta}(u,v)+\hat{\theta}(v,t) = \theta(u,v)+\theta(v,t)$} and {$\hat{c}(u,v) + \hat{h}(v) \leq c(u,v)+h(v).$ }
			Then, $\theta(s,u)+\hat{\theta}(u,v)+\hat{\theta}(v,t) = \theta(s,u)+\theta(u,v)+\theta(v,t)$ and $g(u)+\hat{c}(u,v)+\hat{h}(v) \leq g(u)+c(u,v)+h(v)$.
	\end{enumerate}
	In either case, we have $\pi(s,e)\cup \hat{\pi}(e,t) \preceq \pi(s,e)\cup \pi(e,t),$ thus completing the proof. 
\end{proof}

Next, Lemma~\ref{lemma: baby lemma} states that if an edge on an optimal path is not evaluated, then the edge queue $Q$ must contain an edge such that an optimal subpath from the start edge to this edge is in the search tree $T.$
\begin{lemma}
	\label{lemma: baby lemma}
	For any unevaluated edge $e$, and for any optimal path $P$ from $s$ to $e$, there exists an edge $e' \in Q$ on $P$ such that $\pi(s,e') = \pi_T(s,e').$
\end{lemma}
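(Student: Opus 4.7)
My plan is to proceed by induction on the number of iterations of the main while loop in Algorithm~\ref{a:COAS}, taking the induction hypothesis to be the statement of the lemma itself at the start of each iteration.

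For the base case, before the first iteration we have $T=\{v_s\}$ and $Q$ has just been populated by $\textsf{enqueueOutgoingEdges}(v_s,Q)$. Given an optimal path $P=(s,e_1,\ldots,e_n=e)$, I would take $e'=e_1$: because $\theta_T$ and $g_T$ are effectively infinite on every non-root vertex (including $v_g$), the two pruning conditions of Algorithm~\ref{a:enqueue} are vacuously satisfied, so $e_1\in Q$; and because the tree consists only of $v_s$, the tree-path to $e_1$ is $(s,e_1)$, which coincides with the optimal path $\pi(s,e_1)$.

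For the inductive step, let $(u,v)$ be the edge popped in iteration $j$ and let $e_{k_0}$ be the witness supplied by the IH. If $e_{k_0}\neq(u,v)$, then $e_{k_0}$ remains in $Q$, and any rewire during iteration $j$ can only substitute a weakly-better tree-path for the start vertex of $e_{k_0}$; since that tree-path was already optimal by IH, the substitution is also optimal (nothing can be strictly better than optimal), so $e_{k_0}$ continues to witness the claim. If $e_{k_0}=(u,v)$, then the IH-optimality of $\pi_T(s,(u,v))$ forces the rewire conditions in Algorithm~\ref{a:COAS} to succeed (otherwise the pre-existing tree-path to $v$ would be strictly better than optimal), so $(u,v)$ is added to $T$ and $\textsf{enqueueOutgoingEdges}(v,Q)$ is invoked. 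I would then invoke sub-path optimality of $P$ together with Lemma~\ref{lemma:concatenation_lemma} and the admissibility of $\hat\theta,\hat c,\hat h$ to verify that the next edge $e_{k_0+1}$ on $P$ passes both pruning tests and that the resulting $\pi_T(s,e_{k_0+1})$ equals $\pi(s,e_{k_0+1})$, making $e_{k_0+1}$ the new witness.

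The main obstacle I anticipate is the goal-aware pruning test in Algorithm~\ref{a:enqueue}, which for $e_{k_0+1}$ reads $\theta_T(v)+\hat\theta(v,w)+\hat\theta(w,v_g)\preceq\theta_T(v_g)$, where $w$ denotes the terminal vertex of $e_{k_0+1}$. My strategy is to use admissibility of $\hat\theta$ and Lemma~\ref{lemma:concatenation_lemma} to bound the left-hand side by the class of an $s$-to-$v_g$ path obtained by concatenating the prefix of $P$ with an optimal continuation from $w$ to $v_g$; by sub-path optimality this concatenation cannot exceed the tree's stored best $s$-to-$v_g$ path in the ordering, except in the degenerate case where the tree already contains an optimal $s$-to-$v_g$ path, in which case the lemma's conclusion reduces to the witness on that tree path. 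Threading this case analysis carefully, while accounting for the fact that the algorithm does not eagerly propagate $\theta_T/g_T$ updates to tree descendants after a rewire, is the most delicate aspect of the argument.
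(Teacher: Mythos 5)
Your strategy (induction on the while-loop iterations, carrying the statement of the lemma itself as the invariant) is genuinely different from the paper's, which gives a static argument in the style of Hart--Nilsson--Raphael: at the moment of interest it forms the set $\Delta$ of evaluated edges $e_i$ on $P$ satisfying $\pi(s,e_i)=\pi_T(s,e_i)$, takes the element $e^*$ of highest index, and shows that its successor $e'$ on $P$ is the desired witness via the chain $\pi_T(s,e')\preceq \pi_T(s,e^*)\cup e'=\pi(s,e^*)\cup e'=\pi(s,e')$ combined with the generic bound $\pi_T(s,e')\succeq\pi(s,e')$. That route never has to track what happens from one iteration to the next, which is exactly where your version runs into trouble.

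The inductive step of your proposal has a concrete gap. When the popped edge is the witness $e_{k_0}=(u,v)$, you assert that the rewire conditions of Algorithm~\ref{a:COAS} are ``forced to succeed (otherwise the pre-existing tree-path to $v$ would be strictly better than optimal).'' That dichotomy is wrong: the rewire also fails when the pre-existing tree path to $v$ is \emph{equally} good in the order (same path class, and $g_T(v)\leq g_T(u)+c(u,v)$), which happens whenever $v$ was previously reached optimally through a different parent. In that case neither \textsf{rewireTree} nor \textsf{enqueueOutgoingEdges}$(v,Q)$ executes in iteration $j$, the edge $e_{k_0}$ leaves $Q$, and your mechanism for minting the new witness $e_{k_0+1}$ never fires. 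A witness does still exist --- $e_{k_0+1}$ was enqueued at the earlier iteration in which $v$ first acquired its optimal tree path --- but the bare lemma as induction hypothesis is too weak to see this; you would need to strengthen the invariant to cover all optimally-reached tree vertices and their outgoing edges, or simply adopt the paper's static argument. The remaining difficulty you flag, the goal-aware pruning test in Algorithm~\ref{a:enqueue}, is a real concern, but it is one the paper's own proof also elides (it asserts $e'\in Q$ merely ``since $e'$ is adjacent to $e^*$''), so I would not count it against you beyond noting that your sketch does not yet resolve it either.
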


\begin{figure}[ht]
	\includegraphics[width=1\linewidth]{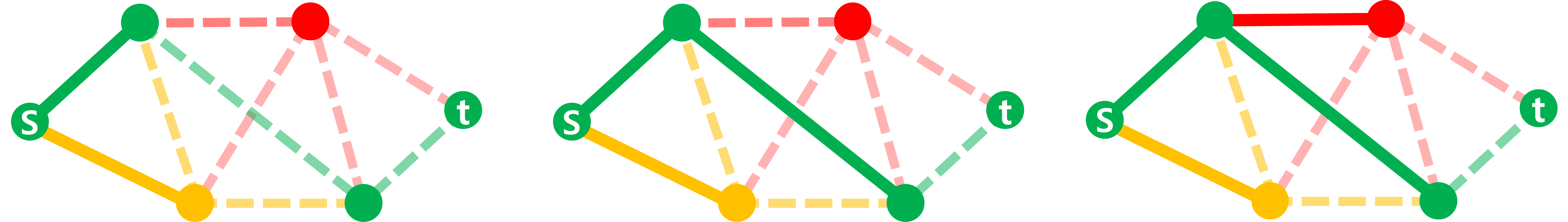}\centering
	\caption{Evolution of search tree $T$ from left to right in a ternary graph with green, yellow, and red classes. \coastar builds the optimal tree $T$ by revealing the classes of the frontier edges in the order defined by the priority queue.}
	\label{f:lemma_fig}
\end{figure}

\begin{proof}
	Let $P = (s=e_0, e_1 ,\ldots , e_n =e).$ 
	If $s \in Q$, that is, if \coastar has not completed the first iteration, let $e' =s.$ Then $\pi(s,s)=\pi_T(s,s)=\varnothing$, and hence the lemma is trivially true. 
	Now, suppose $s$ is evaluated. 
	Let $\Delta$ be the set of all evaluated edges $e_i$ in $P$ such that $\pi(s,e_i) = \pi_T(s,e_i).$
	Then, $\Delta$ is not empty, since by assumption $s\in \Delta.$ 
	Let $e^*$ be the element of $\Delta$ with the highest index. 
	Clearly, $e^* \neq e,$ since $e$ has not yet been evaluated. 
	Let $e'$ be the successor of $e^*$ on $P$.
	Then, $\pi_T(s,e') \preceq \pi_T(s,e^*) \cup e'.$
	However, $\pi_T(s,e^*) = \pi(s,e^*)$ since $e^* \in \Delta.$ 
	Moreover, $\pi(s,e') = \pi(s,e^*) \cup e'$, since $e^*$ and $e'$ are on $P.$
	Therefore $\pi_T(s,e') \preceq \pi(s,e').$
	In general, $\pi_T(s,e') \succeq \pi(s,e')$, and hence $\pi_T(s,e') = \pi(s,e').$ Also, $e'$ must be inserted in $Q$ since $e'$ is adjacent to $e^*$, the highest index edge in $\Delta.$
\end{proof}

\begin{corollary}  		\label{coro: baby corollaray}
	Suppose $\hat{\pi}(e,t) \preceq \pi(e,t)$ for all $e$, and suppose \coastar has not terminated. Then, for any optimal path $P$ from $s$ to a goal $t$, there exists an edge $e' \in Q$ on $P$ with $\pi_T(s,e') \cup \hat{\pi}(e',t) \preceq \pi(s,t).$	
\end{corollary}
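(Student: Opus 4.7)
The plan is to apply Lemma~\ref{lemma: baby lemma} with $e$ set to the goal edge $t$, and then use Lemma~\ref{lemma:concatenation_lemma} to transfer the admissibility of $\hat{\pi}$ across the concatenation with the optimal prefix. In other words, the corollary is a one-step splice of the two preceding results, and my task is to identify the right endpoint at which to splice.

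First I would note that the non-termination hypothesis implies the goal edge $t=(v_g,v_g)$ has not yet been popped and evaluated, so it qualifies as an ``unevaluated edge'' in the sense required by Lemma~\ref{lemma: baby lemma}. Applying that lemma along the given optimal path $P$ from $s$ to $t$ produces an edge $e'\in Q$ lying on $P$ which satisfies the anchoring identity $\pi_T(s,e') = \pi(s,e')$. Because $e'$ lies on the optimal path, I can also split $\pi(s,t)=\pi(s,e')\cup \pi(e',t)$, which is the form needed to apply the concatenation lemma.

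Next I would invoke the hypothesis $\hat{\pi}(e',t)\preceq \pi(e',t)$ and feed it into Lemma~\ref{lemma:concatenation_lemma} with prefix $\pi(s,e')$, yielding
\[
\pi(s,e')\cup \hat{\pi}(e',t) \;\preceq\; \pi(s,e')\cup \pi(e',t) \;=\; \pi(s,t).
\]
Substituting the identity $\pi(s,e')=\pi_T(s,e')$ on the left-hand side gives exactly the desired inequality $\pi_T(s,e')\cup\hat{\pi}(e',t)\preceq \pi(s,t)$.

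The main subtlety is bookkeeping rather than mathematics: verifying that the trivial goal edge $t$ genuinely is among the unevaluated edges whenever \coastar has not terminated (so that Lemma~\ref{lemma: baby lemma} is applicable), and confirming that the hypothesis of Lemma~\ref{lemma:concatenation_lemma} is met for the specific pair $\hat{\pi}(e',t),\pi(e',t)$ pulled out by Lemma~\ref{lemma: baby lemma}. Once these are checked, the corollary follows in a single line, with no further case analysis required.
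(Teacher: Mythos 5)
Your argument is correct and follows essentially the same route as the paper's own proof: apply Lemma~\ref{lemma: baby lemma} (with the goal edge $t$ as the unevaluated edge) to obtain $e'\in Q$ on $P$ with $\pi_T(s,e')=\pi(s,e')$, then apply Lemma~\ref{lemma:concatenation_lemma} and the decomposition $\pi(s,t)=\pi(s,e')\cup\pi(e',t)$. Your explicit check that non-termination makes $t$ unevaluated is a detail the paper leaves implicit, but it does not change the argument.
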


\begin{proof}
	By the Lemma~\ref{lemma: baby lemma}, there exists an edge $e'\in Q$ in $P$ with $\pi(s,e')=\pi_T(s,e').$ 
	Then, by applying the Lemma~\ref{lemma:concatenation_lemma},
	\[
	\begin{aligned}
	\pi_T(s,e') \cup \hat{\pi}(e',t) &= \pi(s,e') \cup \hat{\pi}(e',t) \\
	& \preceq \pi(s,e') \cup \pi(e',t) \\
	& = \pi(s,t),
	\end{aligned}
	\]
	where the last equality holds since $e'\in P$.
\end{proof}

Now we are ready to prove the completeness and optimality properties of COA$^*.$
\begin{theorem}
	Suppose $\hat{\pi}(e,t) \preceq \pi(e,t)$ for all $e$. Then \coastar terminates in a finite number of iterations and finds the optimal path from $s$ to a goal $t$, if one exists.
\end{theorem}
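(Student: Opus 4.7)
The plan is to prove termination and optimality in sequence, with the termination argument resting on the finiteness of $G$ and the optimality argument essentially reducing to Corollary~\ref{coro: baby corollaray}.

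For termination, I would argue as follows. Since $G$ is finite, the set of acyclic paths from $s$ to any vertex $v$ is finite, so both $\theta_T(v)$ (an integer sequence of length $|\mathcal{K}|$ whose entries are bounded by $|E|$) and $g_T(v)$ (a sum of edge costs along an acyclic path) can take only finitely many values during the run. Each call to \textsf{rewireTree} at a vertex $v$ strictly improves the pair $(\theta_T(v), g_T(v))$ in the lexicographic order induced by $\prec$, so each vertex can be rewired only finitely many times. Since every rewire enqueues at most $|V|-1$ outgoing edges, the total number of enqueue operations---and hence the number of main-loop iterations---is finite.

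For optimality, assume an optimal $\pi^* = \pi(s,t)$ exists. When \coastar terminates, it pops an edge of the form $(v_g,\cdot)$, and one can read off from Algorithms~\ref{a:COAS}--\ref{a:enqueue} that the priority key assigned to this edge realizes the ordering of $\pi_T(s,v_g)\cup\hat{\pi}(v_g,t) = \pi_T(s,t)$, because $\hat{\theta}$, $\hat{c}$, and $\hat{h}$ at the goal all vanish. Just before this pop, Corollary~\ref{coro: baby corollaray} furnishes an edge $e'\in Q$ lying on $\pi^*$ with $\pi_T(s,e')\cup\hat{\pi}(e',t) \preceq \pi^*$. Since \textsf{popFrontEdge} returns the edge with the smallest key under $\prec$, the terminal edge's key is no larger than that of $e'$, so
\[
\pi_T(s,t) \,\preceq\, \pi_T(s,e')\cup\hat{\pi}(e',t) \,\preceq\, \pi^*.
\]
Combined with the trivial $\pi^*\preceq\pi_T(s,t)$, this yields $\pi_T(s,t) = \pi^*$.

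The hardest step, I expect, is verifying that the four-key lexicographic priority $(k_1,k_2,k_3,k_4)$ of Algorithm~\ref{a:enqueue} exactly realizes the total order $\preceq$ on the heuristically concatenated paths $\pi_T(s,e)\cup\hat{\pi}(e,t)$ through each queued edge $e$, and does so uniformly across all queued edges. This requires careful book-keeping of the admissibility inequalities $\hat{\theta}\preceq\theta$, $\hat{c}\le c$, $\hat{h}\le h$, invoking Lemma~\ref{lemma:concatenation_lemma} to secure invariance under concatenation, and concluding that the queued key is simultaneously (i) a lower bound on the $\preceq$-ordering of every $s$-to-$t$ extension of $\pi_T(s,e)$ and (ii) actually attained by the terminal edge at the instant it is popped. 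A secondary subtlety is the case when no $s$-$t$ path exists: here $Q$ must eventually empty, which should be captured as an implicit termination check without altering the rest of the proof.
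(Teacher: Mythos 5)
Your overall strategy matches the paper's: both arguments pivot on Corollary~\ref{coro: baby corollaray} for optimality and on the finiteness of acyclic paths for termination. The paper merely packages the same ideas as a three-case proof by contradiction (terminate at a non-goal / fail to terminate / terminate suboptimally), whereas you argue directly; your chain $\pi_T(s,t)\preceq\pi_T(s,e')\cup\hat{\pi}(e',t)\preceq\pi^*$ is just the contrapositive of the paper's ``$e'$ would have been selected for evaluation rather than $t$.'' Your termination bookkeeping via strictly improving rewires is somewhat more explicit than the paper's ``all edges can be re-evaluated at most a finite number of times,'' and is correct as far as it goes.

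There is, however, one genuine (if small) hole. Your optimality argument presupposes that the algorithm terminates by popping an edge with source $v_g$, but your termination argument only shows that the main loop runs finitely many iterations; it does not exclude the loop halting because $Q$ empties before the goal is reached. You relegate the empty-queue scenario to the case where no $s$--$t$ path exists, but the proof obligation runs the other way: when a path \emph{does} exist you must show that $Q$ cannot empty prematurely, and the paper spends the second half of its non-termination case on exactly this point. The fix is cheap and uses machinery you already invoke: by Corollary~\ref{coro: baby corollaray}, at every iteration before goal-termination there is an edge $e'$ of the optimal path sitting in $Q$, so $Q$ is nonempty whenever an optimal path exists and the goal edge has not yet been popped. (Your deferral of the check that the four-key lexicographic priority realizes $\preceq$ on the concatenations $\pi_T(s,e)\cup\hat{\pi}(e,t)$ is less of a concern: the paper also leaves this implicit in the definition of the queue and in Lemma~\ref{lemma:concatenation_lemma}, so you are no worse off than the original on that score.)
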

\begin{proof}
	We prove this theorem by contradiction. Suppose the algorithm does not terminate by finding an optimal solution. There are three cases to consider: 
	\begin{enumerate}[leftmargin=*]
		\item \textit{The algorithm terminates at a non-goal. } 
		This contradicts the termination condition (Line~\ref{a:terminal_cond}, Algorithm~\ref{a:COAS}).
		
		\item \textit{The algorithm fails to terminate.}
		Let $\pi(s,t)$ be an optimal path from $s$ to the goal $t$. Clearly, no edges with $\pi_T(s,e)\cup \hat{\pi}(e,t) \succ \pi(s,t)$ will ever be evaluated, since by Corollary~\ref{coro: baby corollaray} there is some $e'$ with $\pi_T(s,e')\cup \hat{\pi}(e',t) \preceq \pi(s,t)$, and hence, \coastar will select $e'$ instead of $e$.  
		Since there is only a finite number of acyclic paths from the start edge to any edge, all edges with $\pi(s,e)\cup \hat{\pi}(e,t) \preceq \pi(s,t)$ can be re-evaluated at most a finite number of times. 
		Hence, the only possibility left for \coastar that fails to terminate is when the queue $Q$ becomes empty before a goal is reached. Suppose, ad absurdum, that $Q$ becomes empty before the goal is reached. 
		If there exists a path to the goal, and the goal is not already in the tree $T,$ then there exists at least one edge that could improve the current solution path. 
		Then, this edge would have been inserted in the queue $Q$ by Algorithm~\ref{a:enqueue}, contradicting the assumption that $Q$ becomes empty. 
		Hence, \coastar must terminate. 
		
		\item \textit{The algorithm terminates at the goal without finding an optimal solution.}
		Suppose \coastar terminates at some goal edge $t$ with $\pi_T(s,t) \succ \pi(s,t).$ By Corollary~\ref{coro: baby corollaray}, just before termination, there must exist an edge such that $e'\in Q$ with $\pi_T(s,e')\cup \hat{\pi}(e',t)\preceq \pi(s,t).$ Hence, at this stage, $e'$ would have been selected for evaluation rather than $t$, contradicting the assumption that \coastar terminated.
	\end{enumerate}
	Therefore, the algorithm must terminate by finding the optimal solution, if one exists. 
\end{proof}

Note that we proved the completeness and optimality with respect to the total order defined in Section~\ref{sec2}. 
However, the same technique can be applied to a different total order using more general properties of paths if the properties are additive along the path and the start vertex is already on the optimal path. An immediate corollary of this observation is that \coastar can also find a shortest path among the set of paths with shortest inferior class edges. This is useful, especially for computing heuristics in high-dimensional search spaces, as the distance between two vertices is easier to obtain than counting the possible number of edges in the path connecting the two vertices.

\section{Numerical Results}\label{sec5}
\begin{figure*}[ht]
	\begin{subfigure}{\myMediumFigureScale\textwidth}
		\includegraphics[width=\myLineScale\linewidth]{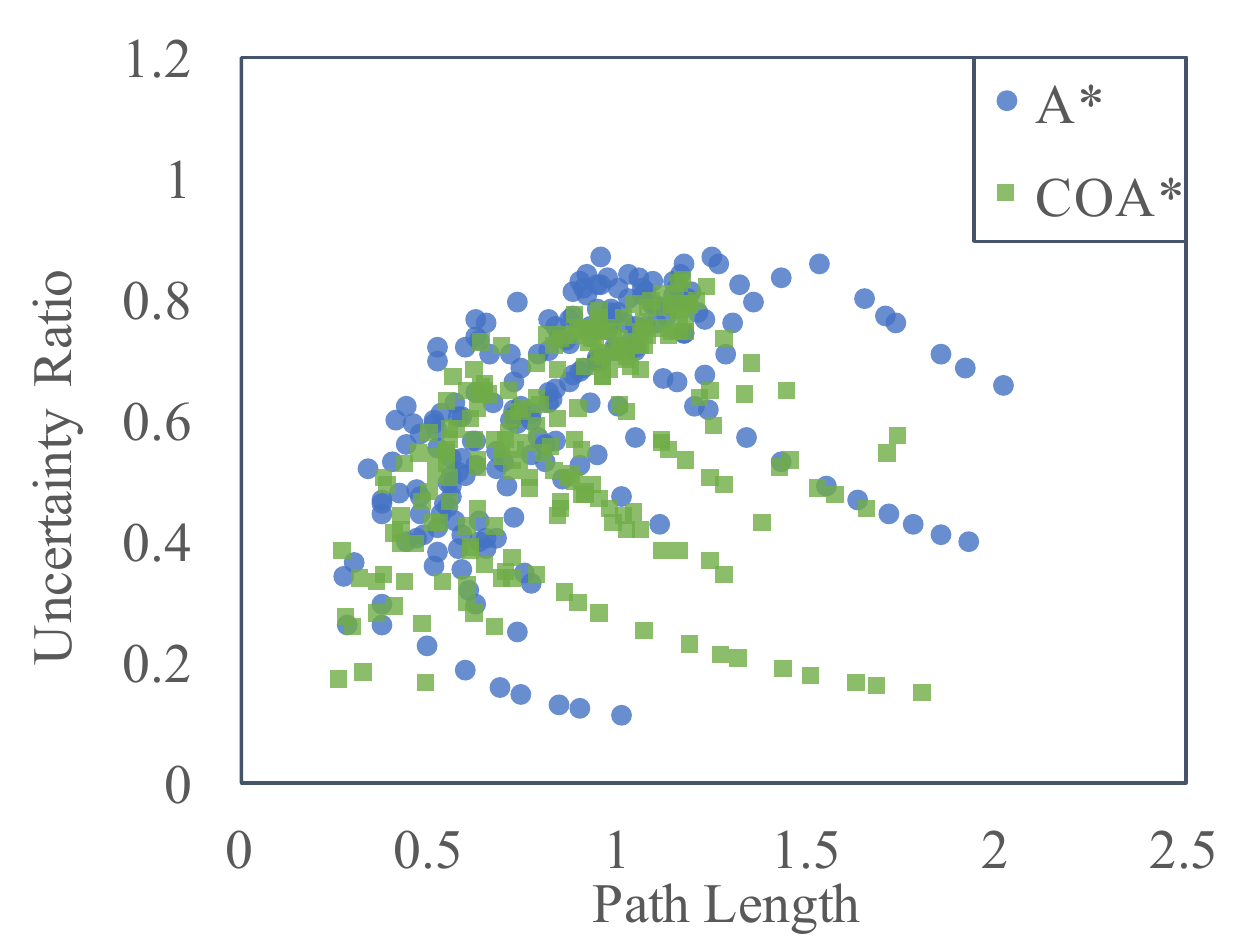}
	\end{subfigure}\hfill
	\begin{subfigure}{\myMediumFigureScale\textwidth}
		\includegraphics[width=\myLineScale\linewidth]{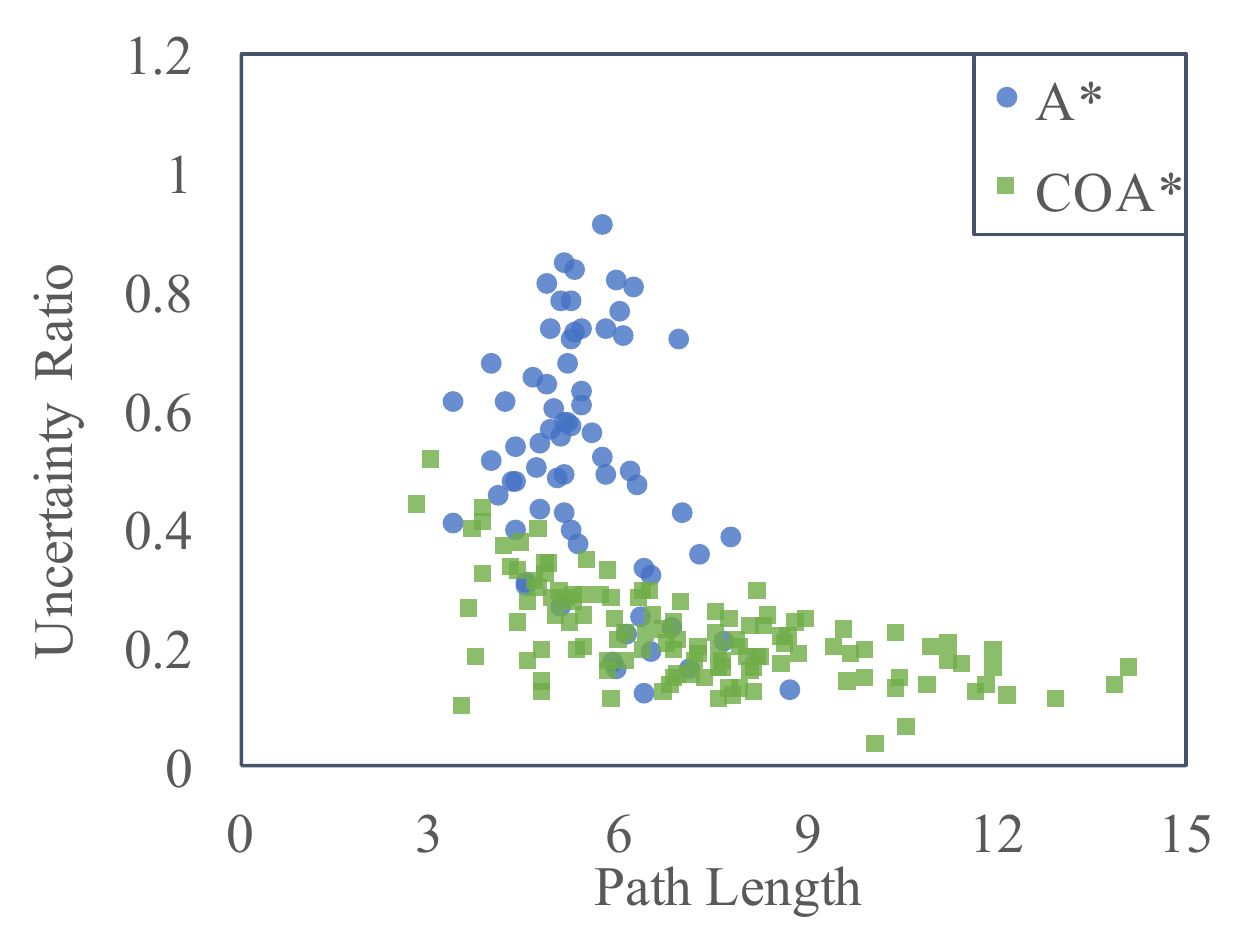}
	\end{subfigure}\hfill
	\begin{subfigure}{\myMediumFigureScale\textwidth}
		\includegraphics[width=\myLineScale\linewidth]{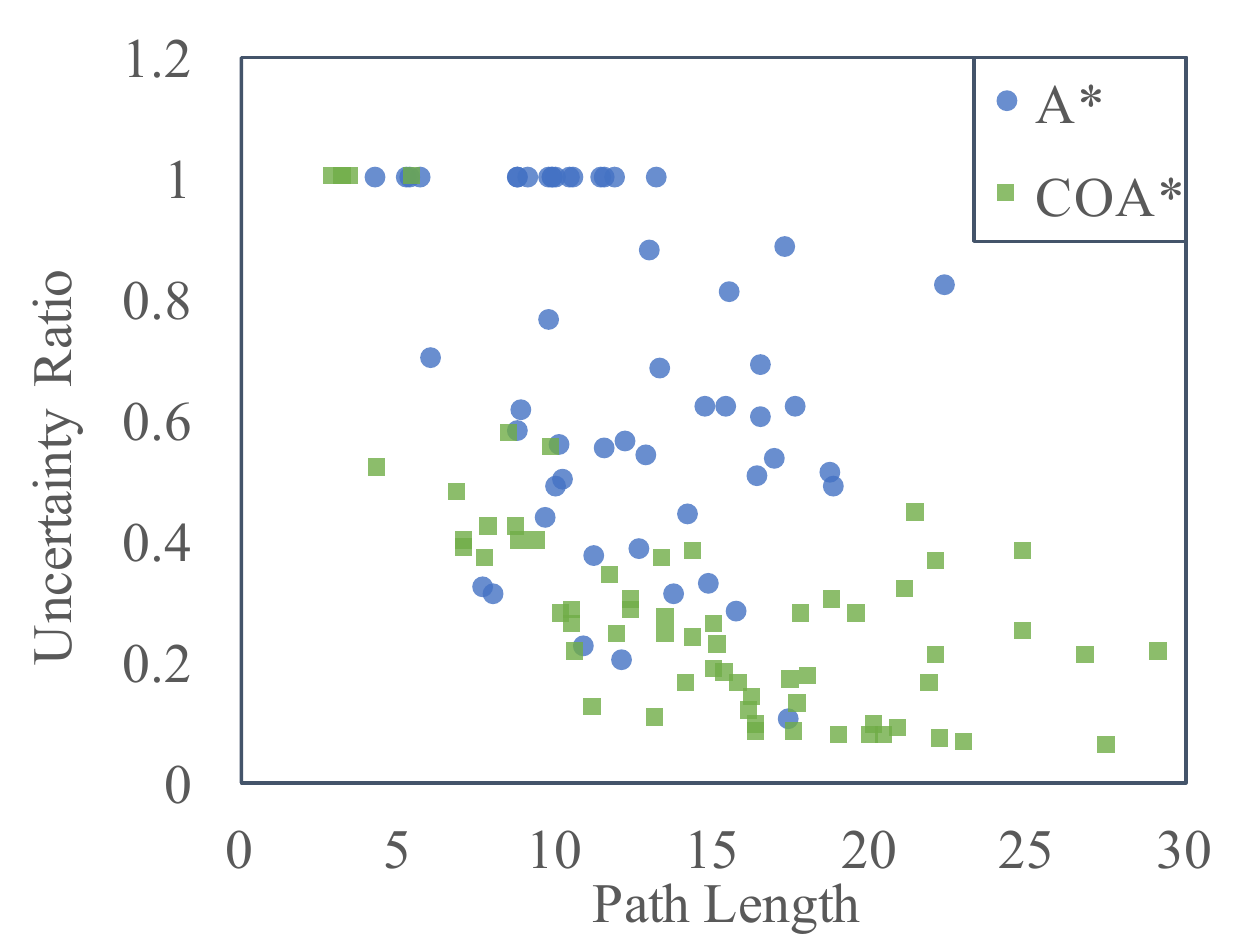}
	\end{subfigure}\hfill
	\caption{Collection of path length and uncertainty ratio for each local path recorded during the perception-plan-action loop, excluding fully known paths: from left to right: 2D mobile robot, 3D robotic arm, 5D robotic arm}
	\label{f:dots}
\end{figure*}

\begin{figure*}[ht]
	\centering
	\begin{subfigure}{\mySmallFigureScale\textwidth}
		\includegraphics[width=\myLineScale\linewidth]{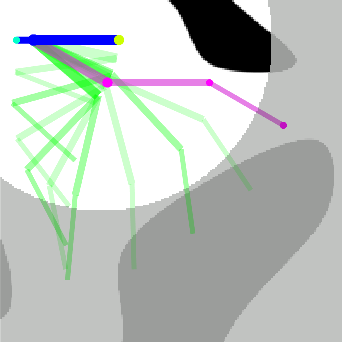}
	\end{subfigure}\hfill
	\begin{subfigure}{\mySmallFigureScale\textwidth}
		\includegraphics[width=\myLineScale\linewidth]{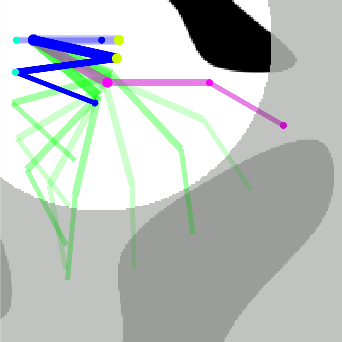}
	\end{subfigure}\hfill
	\begin{subfigure}{\mySmallFigureScale\textwidth}
		\includegraphics[width=\myLineScale\linewidth]{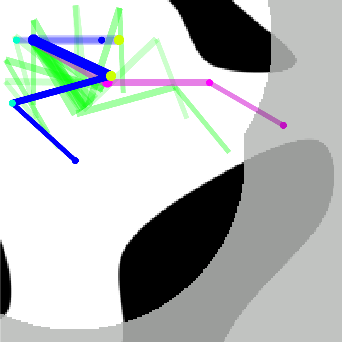}
	\end{subfigure}\hfill
	\begin{subfigure}{\mySmallFigureScale\textwidth}
		\includegraphics[width=\myLineScale\linewidth]{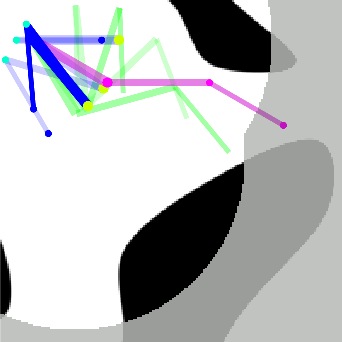}
	\end{subfigure}\hfill
	\begin{subfigure}{\mySmallFigureScale\textwidth}
		\includegraphics[width=\myLineScale\linewidth]{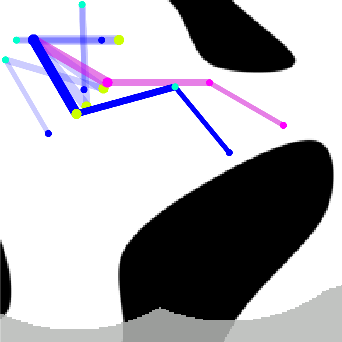}
	\end{subfigure}\hfill
	\begin{subfigure}{\mySmallFigureScale\textwidth}
		\includegraphics[width=\myLineScale\linewidth]{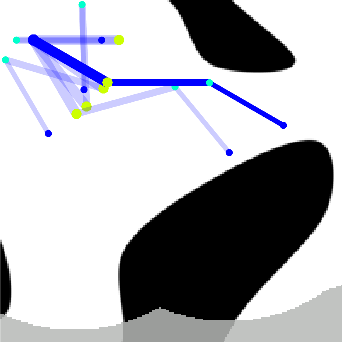}
	\end{subfigure}
	
	\begin{subfigure}{\mySmallFigureScale\textwidth}
		\includegraphics[width=\myLineScale\linewidth]{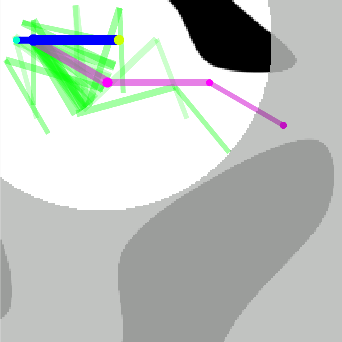}
	\end{subfigure}\hfill
	\begin{subfigure}{\mySmallFigureScale\textwidth}
		\includegraphics[width=\myLineScale\linewidth]{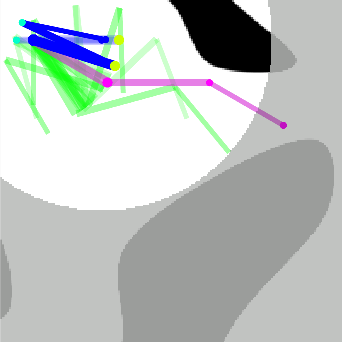}
	\end{subfigure}\hfill
	\begin{subfigure}{\mySmallFigureScale\textwidth}
		\includegraphics[width=\myLineScale\linewidth]{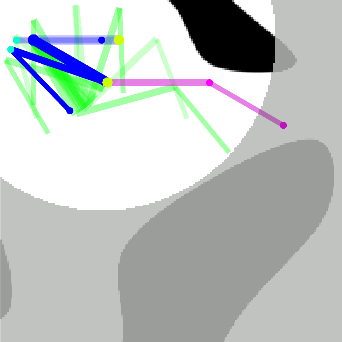}
	\end{subfigure}\hfill
	\begin{subfigure}{\mySmallFigureScale\textwidth}
		\includegraphics[width=\myLineScale\linewidth]{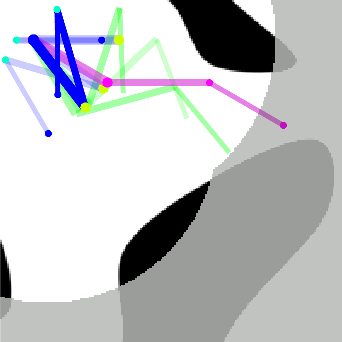}
	\end{subfigure}\hfill
	\begin{subfigure}{\mySmallFigureScale\textwidth}
		\includegraphics[width=\myLineScale\linewidth]{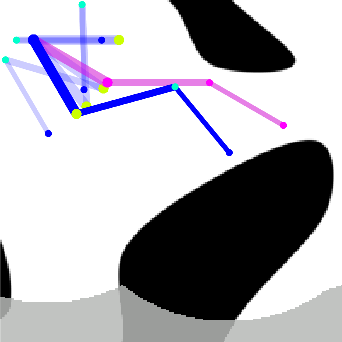}
	\end{subfigure}\hfill
	\begin{subfigure}{\mySmallFigureScale\textwidth}
		\includegraphics[width=\myLineScale\linewidth]{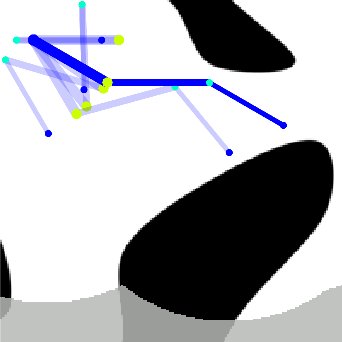}
	\end{subfigure}
	\caption{Propagation of agent with \astar (top row) and \coastar (bottom row) from left to right, where the solid blue arm is the current vertex, the green arms are the vertices on the locally optimal path, the magenta arm is the goal vertex, and the transparent blue arms show the trace of the global solution.}
	\label{f:arms}
\end{figure*}

In this section, we present numerical results of \coastar in scenarios where the planning agent has limited perception of the search space and incrementally gains knowledge of the space as it moves along the local plan. 
We then compare the results of \coastar to a regular \astar which finds a shortest path regardless of uncertainty on an optimistically perceived environment, namely, assuming the unknown region is free of obstacles. 
The perception-plan-action loop is iterated until the agent reaches the goal on a graph with uniformly distributed vertices. At each planning instance, the edges of the graph are classified into three classes: feasible, unknown, and infeasible. 
A sensor with a radial field of view is attached to the moving agent, and all the vertices outside the sensing region are classified as unknown, while each vertex within the sensing region is classified either as feasible or infeasible depending on the corresponding configuration~\cite{Zelinsky1992,Choset1994}. 
Once the agent finds a locally optimal solution in the current map, the agent executes the first action of the plan, i.e., it moves along the first vertex of the locally optimal path, and then updates its map accordingly, as shown in Figure~\ref{f:arms}.  

During the perception-plan-action loop, two measures of interest for each locally optimal plan were recorded at each planning instance, namely, the path length (distance) and the uncertainty ratio. 
The local uncertainty ratio is the proportion of the sum of the length of the uncertain edges to the length of the sum of all the edges in a path, which measures the percentage of the uncertain segment for the path. 
Using sensors with different ranges, about 100 paths were collected for each robot in several search spaces. Figure~\ref{f:dots} shows the results of \astar in comparison to \coastar  for a 2D mobile robot, a 3D robotic arm, and a 5D robotic arm. 
On average, \coastar finds less uncertain paths compared to \astar. This becomes more evident in the higher dimensional search spaces.

\section{Conclusion}

We have introduced the notion of optimality on a weighted colored graph, which encodes both geometric and semantic information of the search space. We present a new search algorithm, the \textit{Class-Ordered} \astar(\coastar) to find a globally optimal path in a weighted colored graph by incrementally building an optimal search tree using a heuristic, and we proved the completeness and optimality of the algorithm. The optimal path of \coastar is the shortest path among the set of paths with minimal inferior class edges. 
In addition, \coastar monotonically finds a better path when the underlying graph has a strictly better class of vertices and edges. 
Finally, \coastar was numerically evaluated on a ternary graph with feasible, unknown, and infeasible classes against the standard \astar algorithm, which finds a shortest path regardless of uncertainty. 
The results of these numerical experiments confirmed the superiority of \coastar in terms of finding safer plans when the search space is partially known. 

\section*{Acknowledgement}
This work has been supported by ARL under DCIST CRA W911NF-17-2-0181 and NSF under award IIS-2008686.

\end{document}